  \providecommand\BibTeX{{%
    \normalfont B\kern-0.5em{\scshape i\kern-0.25em b}\kern-0.8em\TeX}}}
\begin{document}

\title{Web Photo Source Identification based on Neural Enhanced Camera Fingerprint}

\author{Feng Qian}
\authornote{All authors contributed equally to this research.}
\authornote{Corresponding Author.}
\email{youzhi.qf@antgroup.com}
\orcid{0000-0003-2922-9346}
\affiliation{%
  \institution{Ant Group}
  \country{China}
}

\author{Sifeng He}
\authornotemark[1]
\authornotemark[2]
\email{hsf215kg@gmail.com}
\affiliation{%
  \institution{Ant Group}
  \country{China}
}

\author{Honghao Huang}
\authornotemark[1]
\email{huanghonghao.hhh@antgroup.com}
\affiliation{%
  \institution{Ant Group}
  \country{China}
}

\author{Huanyu Ma}
\authornotemark[1]
\email{huanyu.mhy@antgroup.com}
\affiliation{%
  \institution{Ant Group}
  \country{China}
}

\author{Xiaobo Zhang}
\email{ayou.zxb@antgroup.com}
\affiliation{%
  \institution{Ant Group}
  \country{China}
}

\author{Lei Yang}
\email{yl149505@antgroup.com}
\affiliation{%
  \institution{Ant Group}
  \country{China}
}

\renewcommand{\shortauthors}{Qian and He, et al.}

\begin{abstract}
With the growing popularity of smartphone photography in recent years, web photos play an increasingly important role in all walks of life. Source camera identification of web photos aims to establish a reliable linkage from the captured images to their source cameras, and has a broad range of applications, such as image copyright protection, user authentication, investigated evidence verification, etc. This paper presents an innovative and practical source identification framework that employs neural-network enhanced sensor pattern noise to trace back web photos efficiently while ensuring security. Our proposed framework consists of three main stages: initial device fingerprint registration, fingerprint extraction and cryptographic connection establishment while taking photos, and connection verification between photos and source devices. By incorporating metric learning and frequency consistency into the deep network design, our proposed fingerprint extraction algorithm achieves state-of-the-art performance on modern smartphone photos for reliable source identification. Meanwhile, we also propose several optimization sub-modules to prevent fingerprint leakage and improve accuracy and efficiency. Finally for practical system design, two cryptographic schemes are introduced to reliably identify the correlation between registered fingerprint and verified photo fingerprint, i.e. fuzzy extractor and zero-knowledge proof (ZKP). The codes for fingerprint extraction network and benchmark dataset with modern smartphone cameras photos are all publicly available at https://github.com/PhotoNecf/PhotoNecf \footnote{The codes and data are also available with artifact DOIs: 10.5281/zenodo.7627667, 10.5281/zenodo.7627409 and 10.5281/zenodo.7627647.}. 
\end{abstract}

\begin{CCSXML}
<ccs2012>
   <concept>
       <concept_id>10002978.10002991.10002996</concept_id>
       <concept_desc>Security and privacy~Digital rights management</concept_desc>
       <concept_significance>500</concept_significance>
       </concept>
   <concept>
       <concept_id>10010405.10010462.10010465</concept_id>
       <concept_desc>Applied computing~Evidence collection, storage and analysis</concept_desc>
       <concept_significance>500</concept_significance>
       </concept>
   <concept>
       <concept_id>10010147.10010178.10010224.10010240.10010241</concept_id>
       <concept_desc>Computing methodologies~Image representations</concept_desc>
       <concept_significance>300</concept_significance>
       </concept>
 </ccs2012>
\end{CCSXML}

\ccsdesc[500]{Security and privacy~Digital rights management}
\ccsdesc[500]{Applied computing~Evidence collection, storage and analysis}
\ccsdesc[300]{Computing methodologies~Image representations}

\keywords{image source identification, sensor pattern noise, trustworthy mobile sensing, multimedia forensics}


\maketitle

\section{Introduction}
With the explosive growth of social sharing platforms like Instagram, Twitter, TikTok, etc., a massive amount of web photos and video are generated. Consequently, source camera identification that acquires the photographing device information of an arbitrary web image can be availably utilized on wide range of applications. As shown in scenarios from Figure 1, source identification can effectively determine the identity of original creator of the web photos to prevent piracy and protect copyright \cite{DBLP:journals/jthtl/Stewart12,DBLP:journals/corr/cs-CY-0311054,DBLP:journals/tnse/WangSWZ22}. By verifying the photo traceability \cite{DBLP:conf/blockchain2/IgarashiKKKD21}, this framework can also evaluate the trustworthiness of the sensed data. In the case of investigation and evidence collection, source identification can also help to determine the photography device, thereby assisting in product traceability \cite{DBLP:journals/corr/abs-2207-01323,DBLP:journals/sensors/YaoZ22,DBLP:journals/tii/CaoJM20} and media forensics \cite{DBLP:journals/jimaging/FerreiraAC21,DBLP:phd/ethos/Quan20,DBLP:journals/data/FerreiraAC21} cases. In addition, source identified devices of collected images can also be served as an important factor for user authentication system \cite{DBLP:journals/tifs/GaleaF18,DBLP:conf/uss/XuPFM16,DBLP:journals/amm/LiuZHL18}.

\begin{figure}[t!]
\centering
\includegraphics[width=\linewidth]{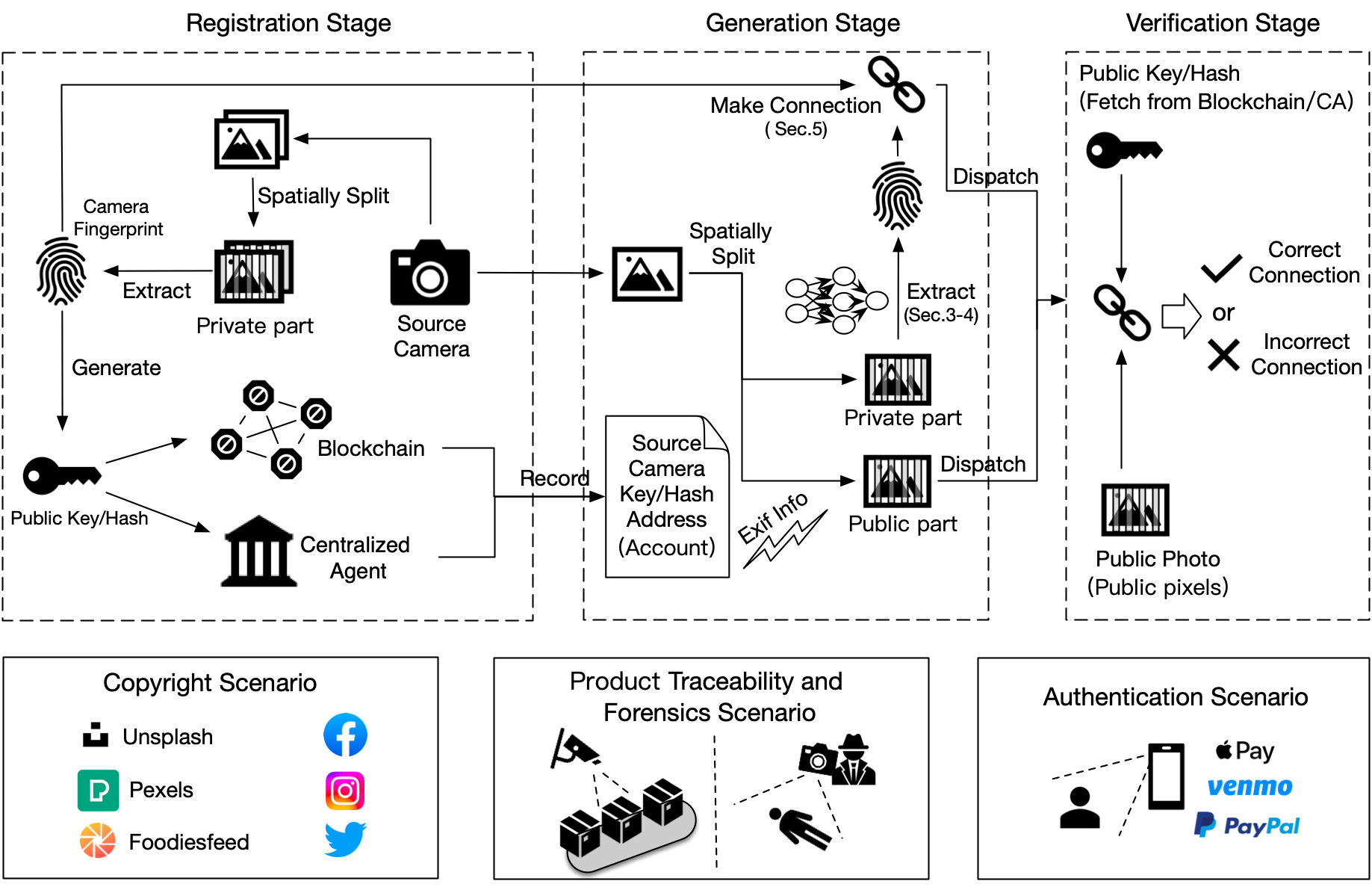} 
\caption{Three main stages of source identification framework mentioned in the Abstract, and several example application scenarios of source identification systems.}
\label{fig1}
\end{figure}

In order to achieve this, previous works utilize sensor pattern noise as the corner stone of digital image forensics \cite{lukas2006digital,chen2008determining}. This sensor pattern noise, also known as camera fingerprint, is presented in each photo and is only associated with the source device rather than high-level semantic content of the photo \cite{lawgaly2016sensor}. 
In detail, digital images can be traced back to their sensors based on unique noise characteristics. Minute manufacturing imperfections are believed to make each sensor physically unique, leading to the presence of a weak yet deterministic sensor pattern noise (SPN) in each photo captured by the same sensor \cite{fridrich2013sensor}. This fingerprint, previously referred to as photo-response non-uniformity (PRNU), can be estimated from images captured by a specific camera for purpose of source camera identification, in which a noise signal extracted from a probe image of unknown provenance is compared against pre-computed fingerprint estimates from candidate cameras \cite{quiring2019security}.

The PRNU is often estimated in the form of the noise residual of an image. The noise residual can be extracted from an image by simply subtracting the denoised image from the original image. Most previous methods obtain denoised image by applying some high-pass filters in the spatial or transform (Fourier, DCT, wavelet) domain \cite{farid2003higher, gou2007noise, he2012digital, li2016identification}. In the conventional PRNU extraction algorithm \cite{lukas2006digital}, the denoising filter adopts the wavelet-based denoising filter which will be introduced in Section 2.2. Noise residuals can be also used in a blind context (no external training) to reveal local anomalies that indicate possible image tampering \cite{lyu2014exposing, cozzolino2015splicebuster}.

In practice, we still have to solve the following challenges based on PRNU: 
1. With the widespread use of smartphones and gradual development of image signal processor (ISP) \cite{Yoshimura2022DynamicISPDC}, will the performance of conventional PRNU algorithm still be guaranteed?
2. In the practical application of PRNU, there are still problems of fingerprint leakage, low accuracy and efficiency \cite{gloe2012unexpected}. 
3. How to effectively apply PRNU algorithm to real scenarios of source identification while ensuring high reliability and security?

To address the above issues, we propose the following solutions:

First, we propose a novel camera fingerprint extraction algorithm based on denoising neural network. In contrast to previous network design with only supervision to approximate the pre-computed PRNU fingerprint \cite{kirchner2019spn}, we further leverage a Deep Metric Learning (DML) framework based on a triplet-wise scheme, which has been shown to be effective in a variety of cases \cite{wang2014learning,kordopatis2017near, ge2018deep}. Meanwhile, we also supplement an additional frequency loss \cite{jiang2021focal} to realize frequency consistency between the predicted noise residual and pre-computed PRNU fingerprint, thereby further improving the stability of fingerprint extraction. Finally, considering the existence of color filter array (CFA) in the imaging process, we introduce Pixel Shuffle operation \cite{shi2016real} into our network. Based on our proposed neural enhanced algorithm, camera fingerprint can be accurately extracted from limited number of RAW photos and it shows significantly higher performance than PRNU results.

Second, we directly extract camera fingerprint from RAW images rather than other compressed formats such as JPEG. Meanwhile, only the splitted part of photo (e.g. only the even lines of pixels) are made public, and the remainder pixels of the photo are privately utilized for fingerprint extraction and comparipliton. Therefore, the camera fingerprint which is extracted from private part can not only be survived from ISP process, but also cannot be leaked for adversarial attacks. Meanwhile, under the theoretical guidance of Cramer–Rao lower bound (CRLB) on the fingerprint variance \cite{chen2008determining}, we further propose two optimization sub-modules (block filtering and burst integration) to improve fingerprint accuracy which can also be broadly applied to different fingerprint extraction algorithms. Besides, we also utilize binary quantization of fingerprints \cite{bayram2012efficient} to improve computational efficiency.

Lastly, we design two novel source identification systems that rely on camera fingerprint extraction algorithm and cryptography schemes. In the first design, the camera fingerprint is compressed to compose a stable private key, and the detailed implementation is similar to PRNU application on user authentication \cite{valsesia2017user}. According to the verification of signature information based on compressed fingerprint, the source device of photo can be easily obtained. The second scheme is to combine zero-knowledge proof (ZKP) \cite{goldreich1994definitions} with camera fingerprint. ZKP protocol (e.g., zkSNARKs \cite{groth2016size, ben2014succinct}) formulates the complete processes (e.g., noise extraction, fingerprint matching, digest generation and matching, etc.) into circuit, creates proof and verifies the proof, therefore achieving traceability and verification of the whole process without data and privacy leakage. With saving source camera's public key/hash address as a meta data, the generated signature/proof flows with web image on the Internet, the image source identification can be verified at any time.

The contributions of this work can be summarized as follows:

\begin{itemize}
\item We have made a significant progress on the conventional camera fingerprint algorithm, reducing the identification error rate of models from 40.62\% to 2.345\%.
\item In order to ensure the privacy and performance of the camera fingerprint, we propose several additional beneficial sub-modules and prove their validity for error rate < 0.5\%.
\item We release a new dataset for benchmark with photos taken from recently announced smartphones. This dataset contains 1,665 photos taken from 15 iPhone cameras and 1,276 photos taken from 15 Android cameras, both in RAW format. 
\item We incorporate cryptography schemes into the overall framework design to improve the stability and security of the system, and complete their project implementation.
\end{itemize}

\section{Background}
\subsection{Sensor noise fingerprints}
Due to sensor element manufacturing imperfections, each camera photo does not only contain the original noise-free image content $I^0$, but also the sensor pattern noise $K$ as a camera-specific, multiplicative noise factor. A common simplified model of the image capturing process assumes the final image $I$ to take the form \cite{fridrich2013sensor}
\begin{equation}
\begin{aligned}
  I = I^0 + I^0K + \Gamma 
  \label{eq1}
\end{aligned}
\end{equation}
where $\Gamma$ reflects a variety of other additive noise terms. Due to its multiplicative nature, the pattern noise is not present in images with dark scene contents (i. e., $I^0\approx 0$). Extensive experiments have demonstrated that the noise factor $K$ represents a unique and robust camera fingerprint \cite{goljan2009large} that can be estimated from a number of images $I_1,...,I_N$ taken with a given camera of interest. The standard approach utilizes a denoising filter $F(\cdot )$ and models noise residuals $W_k = I_k - F(I_k)$ as in Fridrich's work \cite{fridrich2013sensor}:
\begin{equation}
\begin{aligned}
  W_k = I_kK + \Theta _k 
  \label{eq2}
\end{aligned}
\end{equation}

Modeling noise $\Theta$ subsumes $\Gamma$ and residues of the image content due to inherent imperfections of the denoising filter in separating image content from noise. Adopting an independent and identically distributed (i.i.d.) Gaussian noise assumption for $\Theta$, the maximum likelihood estimator of $K$ is \cite{fridrich2013sensor}
\begin{equation}
\begin{aligned}
  \hat{K} =  \frac{\sum_{k=1}^{N}W_kI_k}{\sum_{k=1}^{N}(I_k)^2} 
  \label{eq3}
\end{aligned}
\end{equation}

Given a query image $J$ of unknown provenance, camera identification then works by computing the residual $W_J = J - F(J)$,and evaluating its similarity to a camera fingerprint estimate against a set threshold $\tau$,
\begin{equation}
\begin{aligned}
  \phi_{W_J,\hat{K} } = \mathrm{sim} (W_J,\hat{K})> \tau  
  \label{eq4}
\end{aligned}
\end{equation}

Suitable similarity measures for this task are normalized cross-correlation or peak-to-correlation energy \cite{fridrich2013sensor, lukas2006digital}.

\subsection{Conventional PRNU extraction}
According to Section 2.1, the main algorithm process to obtain the noise pattern is the denoising filter. In the conventional PRNU extraction proposed by Lukas et al. \cite{lukas2006digital}, it is constructed in the wavelet domain. Image default size is a grayscale 512$\times$512 image. Larger images can be processed by multiple blocks and color images are denoised for each color channel separately. The high-frequency wavelet coefficients of the noisy image are modeled as an additive mixture of a locally stationary i.i.d. signal with zero mean (the noise-free image) and a stationary white Gaussian noise $N(0, \sigma _{0}^{2} )$ (the noise component). The denoising filter is built in two stages. In the first stage, the local image variance is estimated, while in the second stage the local Wiener filter is used to obtain an estimation of the denoised image in the wavelet domain. $\sigma _{0}$ is set to 5 (for dynamic range of images 0, ..., 255) to be conservative and to make sure that the filter extracts substantial part of the PRNU noise even for cameras with a large noise component. The detailed implementation can be inferred in the work by Lukas et al. \cite{lukas2006digital}.

\subsection{Limitation of current fingerprint }
PRNU has been proven effectively on cameras in the early years \cite{goljan2009large}. However, as the popularity of smartphones embedded with computational photography process, the effectiveness of PRNU algorithm needs to be further verified or improved. Meanwhile, as mentioned earlier, PRNU algorithm requires a registration process of $N$ images, which is unrealistic in many scenarios. Therefore, the accuracy performance and operational feasibility are the main challenges for applications of camera fingerprint. 

On the other hand, the system security also needs to be considered against fingerprint copy and abusing attack. In this case, the objective of the adversary is to impersonate a legitimate user and authorize a malicious request \cite{ba2018abc}. We also assume that the adversary can access the public photos that the victim captures with her smartphone. Those images may be hard to be kept private anyway, for example, pictures shared through online social networks such as Wechat or Facebook. Therefore, an adversary could estimate the victim smartphone’s fingerprint from public images and embed the obtained fingerprint into an image captured by her own device. Hence, the security of camera fingerprint algorithm in practical scenarios becomes another critical concern.

\section{Fingerprint extraction network}


\begin{figure*}
\begin{minipage}[c]{0.7\linewidth}
\includegraphics[width=\linewidth]{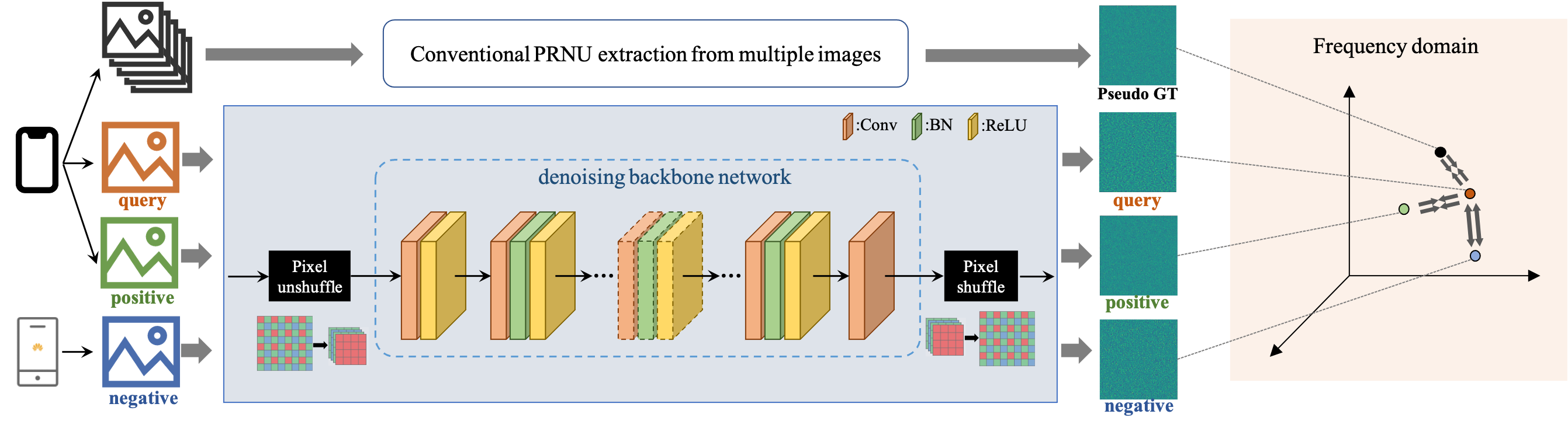}
\caption{Overview of fingerprint extraction network.}
\end{minipage}
\hfill
\begin{minipage}[c]{0.29\linewidth}
\includegraphics[width=\linewidth]{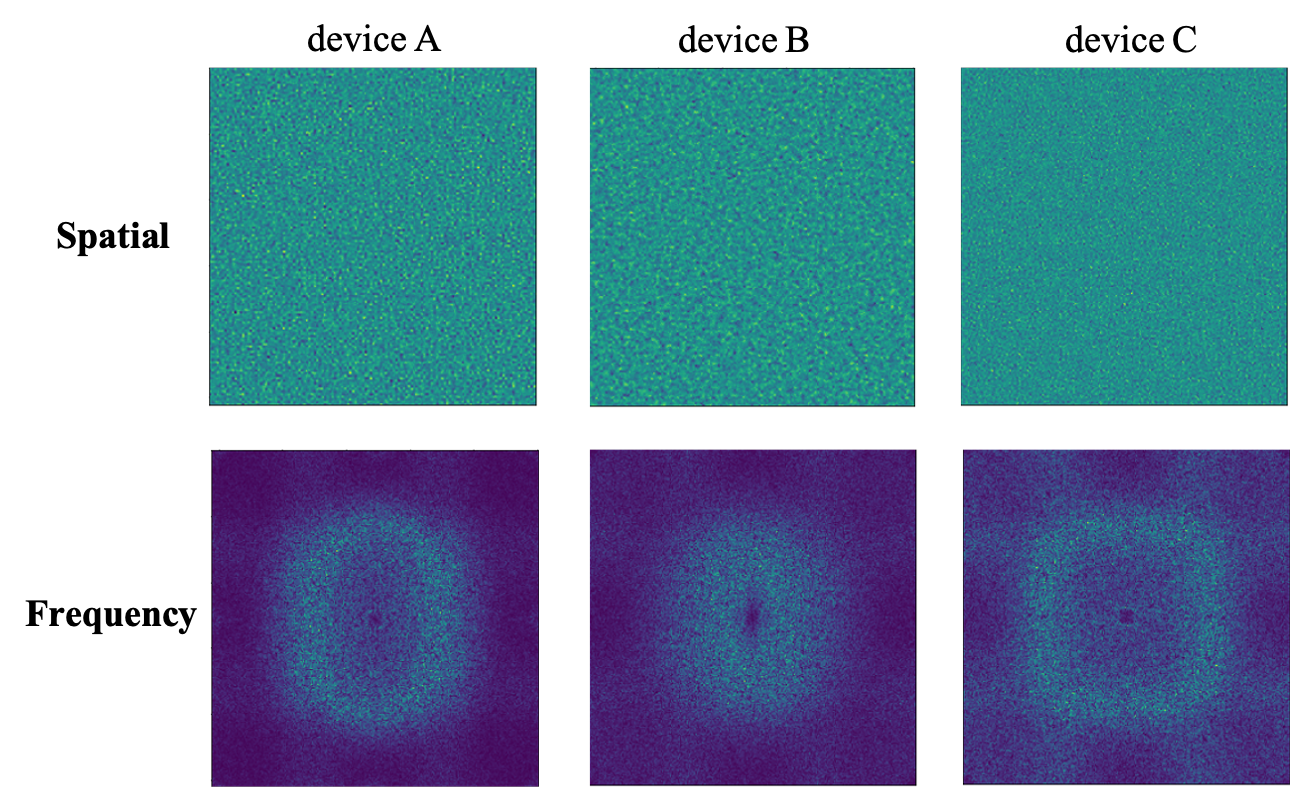}
\caption{Three example fingerprints in spatial and frequency domain.}
\end{minipage}%
\end{figure*}

As mentioned in Section 2.2, the main component of fingerprint extraction is the denoising part to obtain the noise residual $W_k$ of the image. Our goal is to improve the noise residual extraction process, thereby enhancing the individual device sensor artifacts for better identification results. Therefore, the algorithm takes a generic image as input and produces a suitable noise residual as output for next-step fingerprint matching. In this section, we describe our proposed fingerprint extraction network with unique loss and training design. The network overview is indicated in Figure 2.

For the target of obtaining the accurate noise residual for high confidence matching, the main challenge is the difficulty to obtain the ground truth (GT) pattern noise signals, which theoretically requires accurate instruments to measure \cite{wei2020physics}. In order to avoid the hardware-based measurement cost, we propose two methods to address this issue, i.e., deep metric learning and frequency correspondence with pseudo GT.  

Inspired from deep metric learning with successful applications on image embedding, we create a collection of training instances organized in the forms of triplets. Each triplet contains a query image $I_q$, a positive image $I_p$ (a photo from the same camera as the query) and a negative image $I_n$ (a photo from a different camera as the query). For the correlation distance on the embedding space after noise extraction network, the loss of a triplet $(I_q, I_p, I_n)$ is :
\begin{equation}
\begin{aligned}
  L_1 = \mathrm{max} (0, \mathrm{d}[\mathbf{DN} (I_q), \mathbf{DN} (I_p)] - \mathrm{d}[\mathbf{DN} (I_q), \mathbf{DN} (I_n)] + \gamma)  
  \label{eq5}
\end{aligned}
\end{equation}
where $\mathbf{DN}$ is a image denoising backbone network with residual noise as output, $\gamma$ is a margin parameter to ensure a sufficiently large difference between the positive-query distance and negative-query distance, and $\mathrm{d[\cdot , \cdot]}$ is the similar distance between noise residuals which can be measured with Euclidean distance or cosine similarity. We minimize this loss, which pushes the noise embedding distance from same camera $\mathrm{d}[\mathbf{DN} (I_q), \mathbf{DN} (I_p)]$ to 0 and $\mathrm{d}[\mathbf{DN} (I_q), \mathbf{DN} (I_n)]$ to be greater than $\mathrm{d}[\mathbf{DN} (I_q), \mathbf{DN} (I_p)] + \gamma$. In addition, we can also utilize batch hard strategy \cite{hermans2017defense} to search hardest positive and hardest negative within a batch of image dataset for each query sample to yield better performance. With an appropriate triplet generation strategy in place, the model will eventually learn a noise representation (fingerprint) that improves source identification performance.

Another optimization approach for overcoming the difficulty of obtaining ground truth sensor pattern noise signal is to approximate the fingerprint using conventional PRNU algorithm, which can guide and optimize the network at the early training stage. We refer to the fingerprint extracted by the PRNU algorithm as pseudo GT. Here, we can obtain a more accurate approximation of fingerprint by multiple photos as Eq.(3), and this calculation can be processed offline so as not to slow down the training time. An import observation is that most of camera fingerprints are not visually distinguishable from each other as can be shown in Figure 3, but in frequency domain they are apparently different to easily tell apart. Inspired from focal frequency loss for image reconstruction and synthesis in the work by Jiang et al. \cite{jiang2021focal}, we utilize the frequency distance between predicted residual noise of image $I_q$ and estimated PRNU $\hat{K}$ as the second part of loss:
\begin{equation}
\begin{aligned}
  L_2 =  \mathrm{d}[\mathfrak{F}(\mathbf{DN} (I_q)), \mathfrak{F}(\hat{K})]
  \label{eq6}
\end{aligned}
\end{equation}
where $\mathfrak{F}$ denotes 2D discrete Fourier transform, and here we directly use Euclidean distance in $\mathrm{d[\cdot , \cdot]}$ as consistent with frequency loss implementation in \cite{jiang2021focal}. $\hat{K}$ is the fingerprint calculated as Eq.(3) of the same camera as image $I_q$.


Therefore, the final loss of our proposed network is $L_1 + L_2$. Besides the unique loss design, we also introduce some other efficient operations to further improve the fingerprint extraction accuracy. Considering the Bayer filter mosaic of color filter array (CFA) of the camera sensor, one of the inductive bias of convolution layer, i.e., translation invariance, can actually affect the network performance. To solve this problem, we introduce the Pixel Shuffle operation \cite{shi2016real} which is commonly used in super-resolution scenarios into our network, as shown in Figure 2. We first implement sub-pixel convolutions with a stride of 2 for downsampling, therefore obtaining multiple channels and the same color filter of each channel. At last step of the network, we utilize efficient sub-pixel convolution with a stride of 1/2 and obtain noise residual with the same image size as input.The entire network design is also indicated in Figure 2.

\section{Fingerprint optimization modules}
In this section, we further optimize fingerprint extraction in terms of security, algorithm effectiveness and efficiency. 


\noindent\textbf{Leakage prevention.} As mentioned before, an adversary could estimate the victim smartphone’s fingerprint from public images and embed the obtained fingerprint into an image captured by her own device. Therefore, the main challenge is not to reveal fingerprints (directly or indirectly) while maintaining the dominated information of the captured images. One solution is to extract the fingerprint from RAW image and obtain its residual noise which is dominated by high-frequency components. Since web photos are usually processed with JPEG compression as a low-pass filter, the high-frequency components are unavailable or severely degraded in publicly available images, and fingerprint can only be estimated if one has access to the RAW data. In addition, we also spatially split the original images into two parts that are adjacent to each other on pixels, as shown in Figure 4(a). Only part of original image (i.e., even rows) is opened to public and remainder part (i.e., odd rows) is privately used for fingerprint calculation and comparison. Hence, based on the public even-part of image, the original photo can be easily obtained by upsampling, and the adversary cannot derive the fingerprint in private (odd) part from the public web photos. Further on in this paper, we refer the odd rows of photo as RAW odd photo, and the even rows of photo as RAW even photo.

\noindent\textbf{Accuracy improvement.} The estimated camera fingerprint $\hat{K}$ can be derived from Eq.(3). By computing the Cramer–Rao lower bound (CRLB) \cite{chen2008determining} on the variance of $\hat{K}$
\begin{equation}
\begin{aligned}
  var(\hat{K} ) \ge \frac{1}{-E[\frac{\partial ^2 L(K)}{\partial K^2} ]} = \frac{\sigma ^2}{\textstyle \sum_{k=1}^{N}(I_k)^2 } 
  \label{eq7}
\end{aligned}
\end{equation}

Eq.(7) informs us what images are best for the estimation of $\hat{K}$. The luminance (pixel value) of image should be as high as possible but not saturated, and larger $N$ is preferred for higher lower bound of $\hat{K}$. Based on these two factors, we propose block filtering and burst integration to further enhance the fingerprint. As for block filtering, we split the original image into multiple small blocks (e.g., 64 $\times$ 64) and obtain the individual weight of each block according to their average pixel luminance. The weight mask based on luminance can be float values or binary scores on selected threshold or fixed percentage, as shown in Figure 4(b). Then the similarity measure is the weighted sum of normalized correlation of each block. The detailed parameters and experiments are inferred in Section 6. Secondly, burst integration simply estimates fingerprints from continuously taken $N$ images instead of only one image, which can suppress other random noises such as scatter noise, readout noise. As shown in Figure 4(c), we also utilize maximum likelihood estimator (MLE) similar to Eq.(3) to obtain the optimized fingerprint from multiple burst photos. Nowadays, burst photography has become an important technology in computational imaging inside ISP and this can be easily realize by bottom layer API \cite{DBLP:journals/corr/abs-2102-09000}.

\begin{figure}[htb]
\centering
\includegraphics[width=\linewidth]{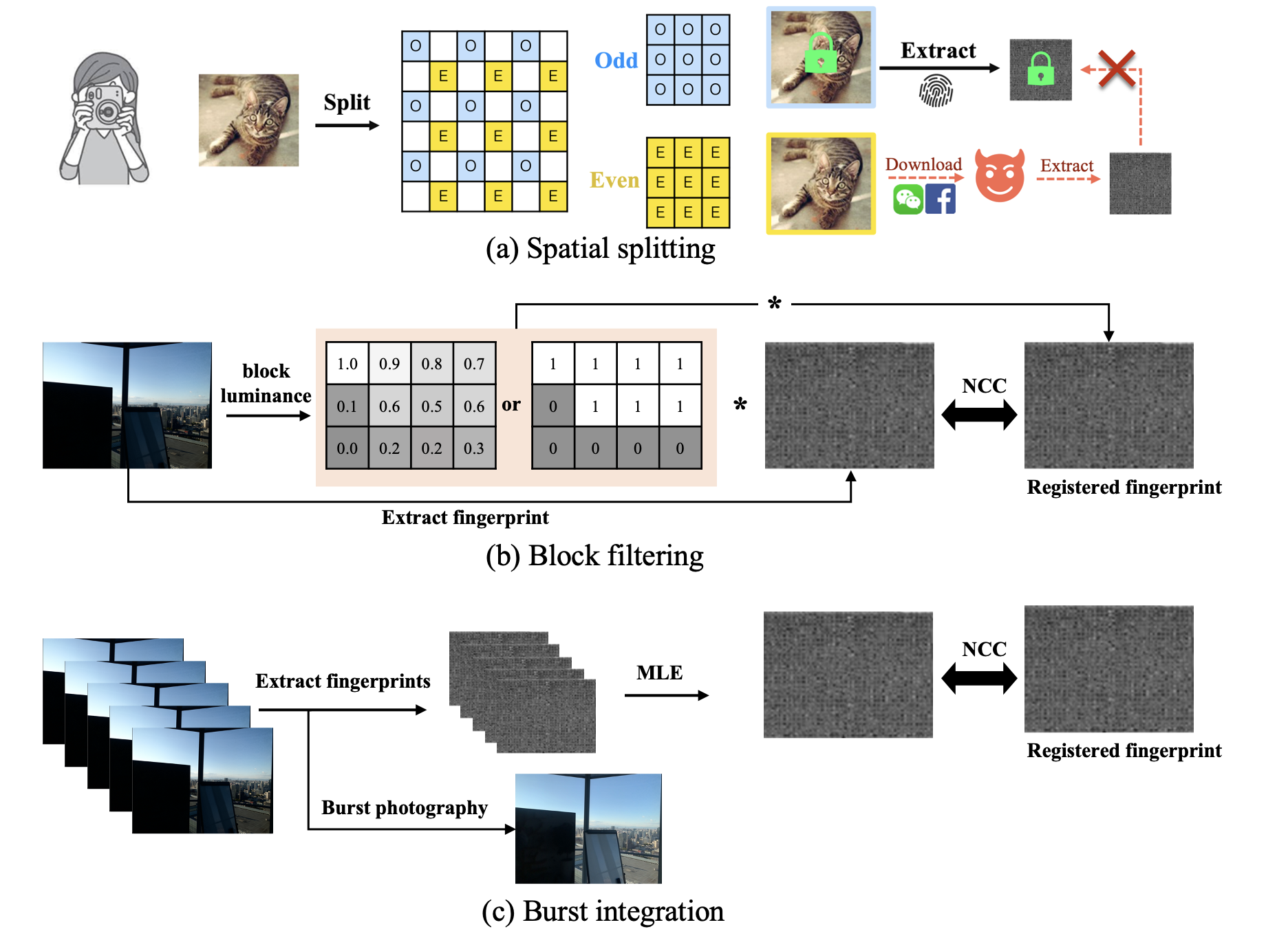} 
\caption{Illustration of sub-modules of (a)spatially splitting, (b)block filtering, and (c)burst integration.}
\label{fig3}
\end{figure}

\noindent\textbf{Computational cost reduction.} Sensor fingerprints are usually large in dimension, especially for millions of pixels in today's smartphones camera. This makes fingerprint process and matching slow due to the large computational cost, leading to impractical implementation for later cryptographic scheme in Section 5. Here, we adopt the binary quantization proposed in the work by Bayram et al. \cite{bayram2012efficient} to reduce storage requirements and computation time while still maintaining an acceptable matching accuracy. In detail, given a real-valued fingerprint $K^R$, the binary-quantized version $K^B$ is:
\begin{equation}
\begin{aligned}
  K^B_i=\begin{cases}
 +1, \ K^R_i \ge  0\\
-1,\ K^R_i <  0
\end{cases}
  \label{eq8}
\end{aligned}
\end{equation}
where $i$ is the pixel index on the fingerprint.
According to experiments in the work by Bayram et al. \cite{bayram2012efficient}, this binarization of sensor fingerprints can achieve 21 times speedup in loading to memory, and 9 times faster computation. In addition, we also provide a simplified version of fingerprint extraction procedure and put it into ZKP circuit, which will be introduced in Section 5. After incorporating these optimization sub-modules, the overall fingerprint extraction procedure is indicated in Figure 5. In real applications (e.g., copyright trading), we verify the public RAW even photo and its connection to the fingerprint. The downstream produced data of RAW even photo such as its irreversible compression (e.g., JPEG) can be verified using near-duplicate image matching methods which is beyond the scope of this paper.

\begin{figure}[htb]
\centering
\includegraphics[width=\linewidth]{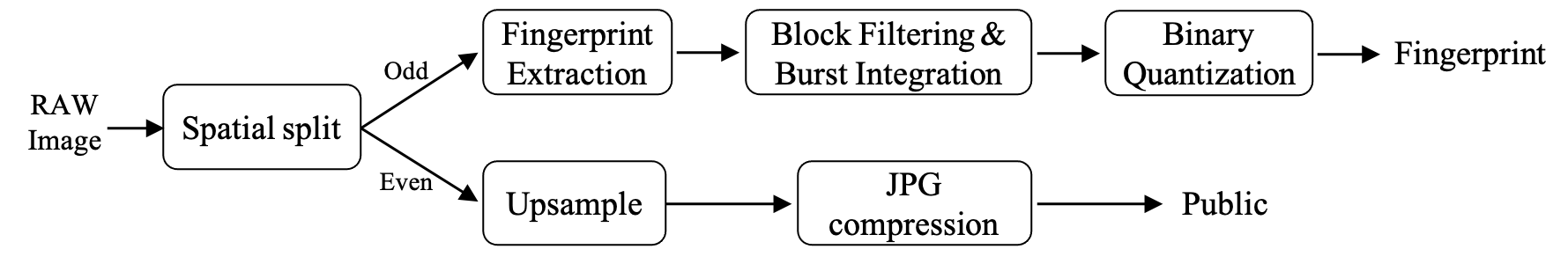} 
\caption{Overall fingerprint extraction procedure including optimization sub-modules.}
\label{fig3}
\end{figure}

\section{Source identification system}
In this section, we propose two source identification systems which integrates previously proposed fingerprint extraction network and optimization modules. 
We also incorporate cryptographic schemes to achieve the complete scheme design with higher reliability and security, so that it can be applied in real scenarios. 

Both these two practical schemes shown in Figure 6 contain three stages, i.e., registration stage for obtaining reliable device fingerprint with one or more photos as input, generation stage for taking one photo and uploading the photo together with identification script (signature or ZKP script) to public, verification stage for identifying the source camera of photo. Notably, registration stage only needs to be executed once for each device, and the verification stage can be executed anytime, anywhere and many times.

\begin{figure}[htb]
\centering
\includegraphics[width=\linewidth]{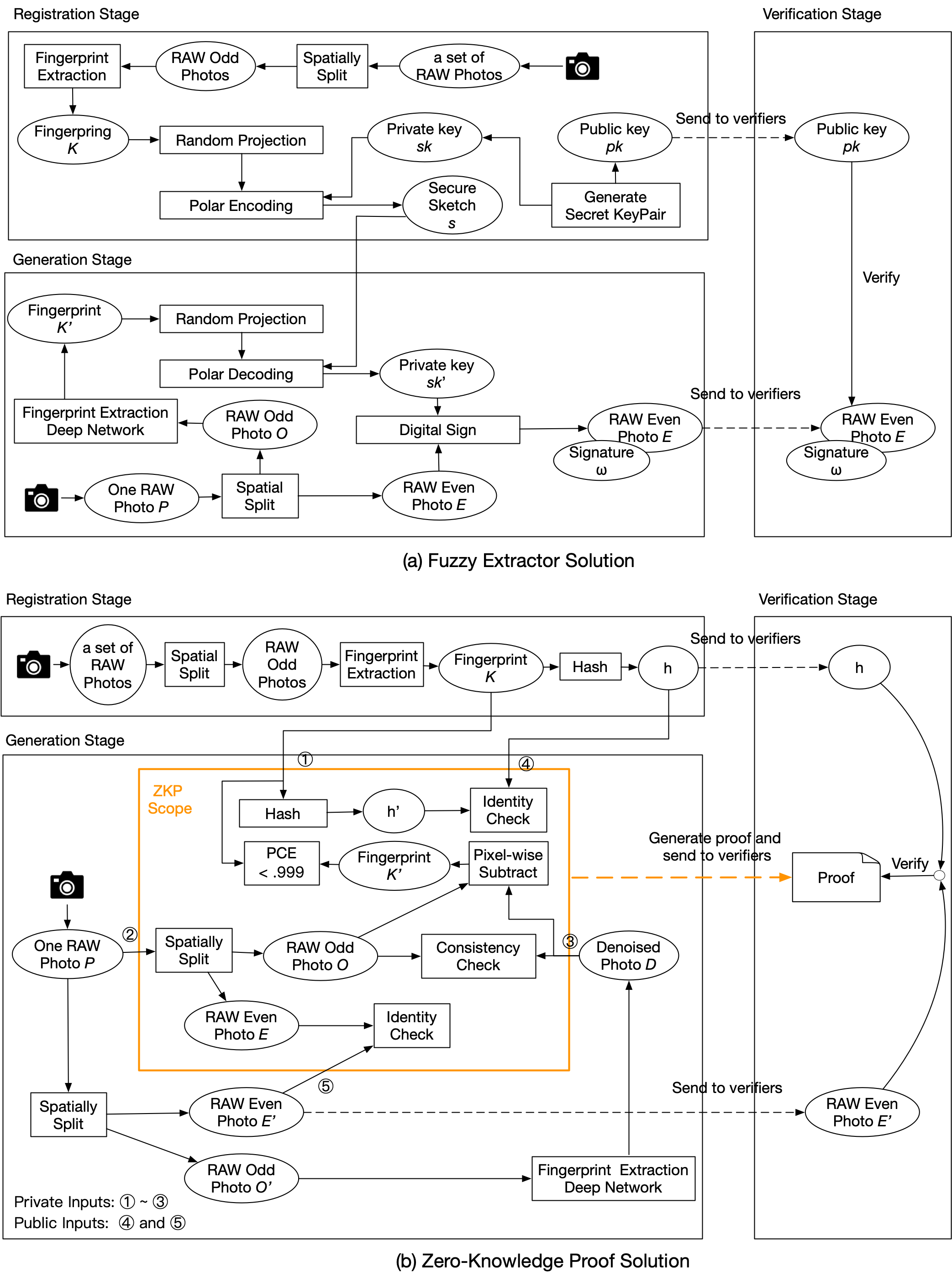} 
\caption{Two proposed source identification schemes based on fuzzy extractor and zero-knowledge proof respectively.}
\label{fig3}
\end{figure}

\subsection{Fuzzy extractor solution}
Our first solution is deeply inspired by the PRNU-based key management scheme presented by Valsesia et al. \cite{valsesia2017user}.  We present a PRNU based digital signature based authentication scheme. Our main idea is to use the camera fingerprint of a user’s device as a physical unclonable function (PUF), which enables a hide-and-recover scheme of user’s private key $sk$ in a great change to success if user’s private key $sk$ is encoded with  polar coding \cite{Arkan2008ChannelPA} and the user is capable to extract similar fingerprints from same device. 

In the registration stage, the system extracts the fingerprint from a certain number of photos (single photo or multiple photos registration) based on the fingerprint extraction method. In the extraction we use only the odd rows of photos to prevent information leakage as presented in Section 4. Instead of directly sending the fingerprint consisting millions of real numbers, the system first compresses it by previously mentioned binarization and random projection \cite{valsesia2015compressed}. The system also stores some side information related to the seed of the pseudo random number generator and the positions of the entries with largest magnitude (outliers) within those random projections, which will be then used in the generation stage. The exact algorithm as well as the role of the outliers will be made clear in the following sections. After that the compressed fingerprint is processed by a fuzzy extractor \cite{dodis2004fuzzy, valsesia2017user}. Namely, the system:

\begin{itemize}
\item Firstly, generate a key pair consisting of a private key $sk$ and public key $pk$, $(sk, pk) \gets \mathrm{KeyGen}(1^\lambda ) $, where $\lambda$ is the security parameter, here we let $\lambda$ be 128 for 128-bit security.
\item Next, generate a secure sketch $s$ of $sk$ , $s = K \oplus \mathbb{C} (sk)$, $\mathbb{C}$ denotes an $(m, \lambda)$ error correcting polar code where $m$ is bit length of fingerprint $K$. 
\item Then the system registers the public key $pk$ to the Verifier, stores the secure sketch $s$ publicly or locally and discards the private key $sk$.
\end{itemize}

In the generation stage, once the user takes a photo the system reproduces a fingerprint $K'$ by our proposed deep extraction network and compresses it using random projection \cite{valsesia2015compressed} according to the stored side information. The system then uses the fuzzy extractor scheme for reproducing the private key string from the compressed fingerprint of $K'$ and the secure sketch $s$, $sk' = \mathbb{D}(K' \oplus s)$, $\mathbb{D}$ denotes the decoding algorithm of the polar error correcting code \cite{arikan2009channel}. Then the system signs the RAW even photo with private key $sk$ to produce a signature $\omega$ using digital signature schemes such as ECDSA, SM2 \cite{johnson2001elliptic, martinkauppi2020design} etc. If the newly taken photo provides a version of the compressed fingerprint sufficiently close to the registered one, then the system can reproduce the same private key of the registration stage.

In the verification stage, the verifier verifies the signature $\omega$ with the received RAW even photo and public key $pk$. If the verification algorithm passes, it indicates that the reproduced private key is identical to the one discard in the registration phase; otherwise the reproduction of private key failed which means the two fingerprints are not close to each other.

\subsection{Zero-knowledge proof solution}
Different from signature based authentication scheme that the realization of source identification is based on the device owner to honestly signing photos from his own camera, our second zero-knowledge proof solution enables the device owner(prover) to convince all verifiers that the photos presented are from certain registered source camera if the prover is capable to produce a valid proof of the predetermined zero-knowledge argument.

In the registration stage, the prover again extracts the fingerprint from a set of photos based on the conventional PRNU extraction mehtod. Then the prover computes the digest $h = hash(K)$ of the device fingerprint $K$ via cryptographic secure hash function such as SHA256 \cite{rachmawati2018comparative}. The digest is sent to verifier as the identity of camera. The prover stores the device fingerprint $K$.

In the generation stage, once the user takes a photo from the source camera the prover proves that the photo is indeed taken from the camera with registered identity. To achieve such obligation, we introduce a ZKP solution, the solution consists of two roles, prover and verifier, where the prover wants to convince the verifier that some statements are true without revealing.

In our case, the statement is very complicated thus industrial ZKP protocol for general statements (e.g., zkSNARKS\cite{eliben19zksnarks}) is adopted. Our goal is to translate the process of generating matching digest of fingerprint from RAW photo into arithmetic statements and thus can be proved via zkSNARKs. 
The statements includes:
\begin{itemize}
\item The prover has a RAW photo $P$ that is spatially split into an RAW odd photo $O$ and an RAW even photo $E$.
\item the RAW even photo $E$ is identical to the photo $E'$ which will be sent to verifiers.
\item There exists a denoised photo $D$ that is consistent with the RAW odd photo $O$. The consistency check procedure ensures sufficient similarity of the low-level image features between two photos which we will describe in detail.
\item The pixel-wise subtraction of the two photos $O-D$ (i.e., reproduced fingerprint $K'$) is correlated with the registered fingerprint $K$.
\item The digest $h'$ (via e.g., SHA256) of a fingerprint $K'$ is identical to the registered digest $h$ (via e.g., SHA256). 
\end{itemize}

The overall statement can be summarized as follows:
\begin{equation}\label{eqn:zkpstatement}
\begin{aligned}
\Pi _{statement}= \ & \{P,O,E,E',D,K,K',h,h'\mid  \\
\ & P=\{O,E\},E\approx E' \\
\ & 1=\mathrm{CheckConsistency} (D,O), \\
\ & K'= (O-D)\approx K, h=h'=\mathrm{hash} (K)\}
\end{aligned}
\end{equation}
To prove the statement above, user has the private inputs including $K, P, D$ as witness and $h, E'$ as public inputs, the prove system outputs a proof script $ps$.

We would incur a large computational cost if we kept extracting the denoised image in ZKP circuit. To address this problem, we design a consistency checking procedure that excludes the heavy extraction network from ZKP circuit while approximately preserving the correctness and completeness. Assuming the denoised image is already extracted and passed as an input of ZKP circuit, the generation process confirms two necessary conditions: (1) the noise pattern (i.e., the original image subtract the denoised image) is correlated with the registered camera fingerprint; (2) and in addition, the denoised image is consistent with the original odd image. For the first condition, we use normalized cross-correlation to measure the correlation. For the second condition, we design a consistency checking procedure as show in Algorithm 1. 

\begin{algorithm}
\caption{Consistency Checking Procedure}

\textbf{Inputs:}\ odd image $O$ and denoised image $D$ \\
    Grid partition $O$ into $N$ disjoint patches $\{o_k | k \in 1..N\} $ \\
    Grid partition $D$ into $N$ disjoint patches $\{d_k | k \in 1..N\} $ \\
    $count = 0$ \\
    \For{each ($o_k$, $d_k$) /* $o_k$ and $d_k$ are in same location */} { 
        \uIf{$C1(o_k,d_k) \geq C1\_thld$ or $C2(o_k,d_k) \geq C2\_thld$} {
            $count = count + 1$ \ \ // $o_k$ is consistent with $d_k$
        }
    }
    
    \textbf{if} $count \geq count\_thld$ \textbf{then} \\
    \qquad \textbf{return} True \ \ // $D$ is consistent with $O$ \\
    \textbf{else} \textbf{return} False \ \ // $D$ is inconsistent with $O$
\end{algorithm}

In Algorithm 1, we first grid partition the odd image $O$ and the denoised image $D$ into disjoint patches (practically we use patch size of 128 $\times$ 128). Then for each patch pair $(o_k, d_k)$ with the same location, we calculate the values of consistency coefficients $C1$ and $C2$ which we define as follows.

\begin{equation}\label{eqn:c1}\begin{aligned}
C1(o_k,d_k) = Jaccard(& Threshold(Sobel(o_k)), \\ & \ Threshold(Sobel(d_k)))
\end{aligned}\end{equation}

\begin{equation}\label{eqn:c2}\begin{aligned}
C2(o_k,d_k) = max\{
& IoA(Threshold(Pool(o_k)), \\
& \hspace{2em} Threshold(Pool(Sobel(d_k)))), \\ 
& 1-IoA(Threshold(Pool(o_k)), \\
& \hspace{3.5em} Threshold(Pool(Sobel(d_k)))) 
\  \}
\end{aligned}\end{equation}

Given $Sobel$ is a sobel operator \cite{seger2012generalized} with kernel size of 3$\times$3, $Threshold$ is a thresholding operator \cite{ridler1978picture} with mean pixel value as the threshold, $Pool$ is a mean pooling operator \cite{mouton2021stride}
and $IoA$ is the intersection over pixel-wise area defined as follows (X and Y are two-dimensional binary matrices with the same size).

\begin{equation}\begin{aligned}
IoA(X,Y) = 1-\frac{|logical\_xor(X,Y)|}{|X|}
\end{aligned}\end{equation}

As described in Eq.\eqref{eqn:c1} and Eq.\eqref{eqn:c2}, the computational cost of $C1$ and $C2$ involves merely some linear operations which is much less than fingerprint extraction network. In Appendix D, we visualize the ability of consistency coefficients which shows $C1$ focuses on close-up consistency while $C2$ focuses on contour consistency. Collaboratively using $C1$ and $C2$ can detect almost all the near duplicate patches (i.e., image patches and their denoised version).

In the verification stage, the verifier receives the proof script $ps$, the RAW even photo $E'$ and the registered digest $h$ then checks the proofs for alleged statements. A successful verification of proof script indicates that either the device honestly takes the RAW photo $P$ (from which $E'$ is spatially splitted) from the registered source camera. If the verification failed, then it tells that the RAW even photo $E'$ and the source camera are not connected. In the next subsection, we analyze that forging the proof script is difficult.

\subsection{Security analysis}
We analyze security issues on our proposed solutions individually from cryptographic side. Different from performance analysis of fingerprint extraction algorithms in Section 6, cryptographic security protects our solution against attackers in real applications.

\textbf{Security of Fuzzy Extractor Solution:} Our fuzzy extractor solution works under an important assumption that attacker do not have access to the source camera, RAW odd photo, fingerprint $K$ and secret key $sk$. In order to forge a signature $\omega$, the attacker must be able to acquire either the secrete key $sk$ or the fingerprint $K$ extracted from RAW odd photo. As we prove in Appendix C, probability of success of this attack can be upper bounded as follows.

\begin{theorem}
If an attacker do not have access to the source camera, RAW odd photo $O$, fingerprint $K$ and private key $sk$, then the probability for the attacker to successfully forge a cryptographic secure signature (e.g., ECDSA, SM2 etc.,) with public key $pk$ is $P_a \leq \frac{1}{2^{\lambda-1}}$ where $\lambda$ is the security parameter.
\end{theorem}

\textbf{Security of Zero-knowledge Proof Solution:} Our ZKP solution works under an important assumption that attacker do not have access to the fingerprint $K$ which is secretly protected by the device. Recall that our ZKP solution requires the prover to prove the statement \eqref{eqn:zkpstatement}, an attacker must be able to either forge the public inputs that complies with the statement or convince the verifier of a false statement. The latter indicates that the attacker is able to break the soundness property of underlying ZKP system, which is beyond the scope of this paper. As we prove in Appendix C, probability of success of this attack can be upper bounded as follows.

\begin{theorem}
Let $hash(\cdot)$ be a cryptographic secure hash function (e.g., SHA256, SHA3 \cite{dworkin2015sha} etc.), if the attacker do not have access to fingerprint $K$ and can not break the pre-image resistance property of $hash(\cdot)$ \cite{rogaway2004cryptographic}, then the attacker can forge a prove of statement \eqref{eqn:zkpstatement} with probability $P_b \leq \frac{1}{2^m}+\frac{1}{2^{2\cdot \lambda}}$ where $m$ is the bit length of fingerprint $K$,  $\lambda$ is the security parameter and a pre-image here refers to the message mapped to a particular digest via hashing.  
\end{theorem}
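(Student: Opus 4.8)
The plan is to reduce any successful forgery of a proof for $\Pi_{statement}$ in \eqref{eqn:zkpstatement} to one of two events — either the attacker reconstructs the genuine device fingerprint $K$, or the attacker exhibits a pre-image of the registered digest $h=\mathrm{hash}(K)$ that differs from $K$ — and then to bound each event separately and combine them by a union bound.

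First I would fix the attack model described just before the theorem: the attacker is given the public data $(h,E')$, where $h=\mathrm{hash}(K)$ is the honestly registered digest and $E'$ is an even photo of the attacker's own choosing, and wants to output a proof script $ps$ that the verifier accepts. Since soundness attacks against the underlying zkSNARK are explicitly out of scope, I may assume that an accepted $ps$ really does certify a witness satisfying every clause of \eqref{eqn:zkpstatement}; in particular that witness contains some fingerprint value whose hash equals $h$. The other clauses ($P=\{O,E\}$, $E\approx E'$, the consistency check $\mathrm{CheckConsistency}(D,O)=1$ of Algorithm 1, and $K'=O-D\approx K$) only restrict the admissible witnesses further, so dropping them merely helps the attacker; the one binding requirement is a witness string $\tilde K$ with $\mathrm{hash}(\tilde K)=h$.

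Next I would split on whether $\tilde K = K$. If $\tilde K = K$, then the witness literally contains the genuine $m$-bit fingerprint; by the leakage-prevention design of Section 4 the fingerprint lives in the private odd rows and is not recoverable from the public even photo $E'$, and by hypothesis the attacker has no access to $K$, the source camera, or the RAW odd photo, so the attacker can do no better than guessing and $\Pr[\tilde K = K]\le 2^{-m}$. If $\tilde K \ne K$, then $(\tilde K,K)$ is a colliding pre-image pair for $\mathrm{hash}$ on the fixed target digest $h$, so producing $\tilde K$ breaks the pre-image resistance of $\mathrm{hash}$; modeling $\mathrm{hash}$ as having a $2\lambda$-bit digest (e.g., SHA256 with $\lambda=128$), a trial lands on a prescribed digest with probability $2^{-2\lambda}$, which bounds this event. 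The union bound then gives $P_b \le 2^{-m} + 2^{-2\lambda}$, as claimed.

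I expect the delicate step to be making the ``$\tilde K = K$ can only be achieved by guessing'' claim rigorous. Strictly, the public digest $h=\mathrm{hash}(K)$ does leak information about $K$, so $\Pr[\tilde K = K]\le 2^{-m}$ must be justified from the pre-image-resistance (one-wayness) hypothesis already in the statement — equivalently, in the random-oracle model — and it tacitly assumes the fingerprint has essentially full min-entropy across its $m$ bits, without which $2^{-m}$ should be replaced by a min-entropy term. A minor point worth recording is why the second term is the pre-image bound $2^{-2\lambda}$ and not the birthday bound $2^{-\lambda}$: the attacker must hit the one prescribed digest $h$, so this is a (second-)pre-image problem rather than an arbitrary-collision problem, matching the hypothesis that the attacker cannot break pre-image resistance. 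With those two points settled, the remaining argument is the routine union bound above.
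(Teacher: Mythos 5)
Your proposal is correct and follows essentially the same route as the paper's own proof: a case split between the forged witness containing the genuine fingerprint $\tilde K=K$ (bounded by $2^{-m}$ using the full min-entropy / randomness of the fingerprint and the attacker's lack of access to the camera, odd rows, or $K$) and the witness containing a different pre-image of the registered digest $h$ (bounded by the pre-image-resistance brute-force probability $2^{-2\lambda}$), combined by a union bound. Your added remarks on why the $2^{-m}$ guess bound needs the one-wayness/min-entropy assumption and why the second term is a pre-image rather than a birthday bound are refinements of, not departures from, the paper's argument.
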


\section{Experiments}
\subsection{Implementation details}

\textbf{Dataset and metric.} As mentioned in Section 4, in order to avoid fingerprint leakage, we propose to utilize RAW images rather than JPEG images for fingerprint extraction and matching. However, there is no large-scale RAW photo dataset for training stable fingerprints. Therefore, we collect a large amount of RAW photos taken by iPhones, consisting of over 150,000 images and 72 cameras. Among them, we select 1,665 photos taken by 15 different cameras as the benchmark test set, and ensure that the camera devices in the benchmark set do not exist in the training set. We train the fingerprint extraction network on the splitted training part of this RAW dataset, and benchmark our proposed algorithm with the test set. We utilize normalized cross-correlation as the similarity measure for camera identification. As for method comparison metric, we adopt AUC (Area Under Curve, higher is better) \cite{lobo2008auc} and EER (Equal Error Rate, lower is better) for performance. 

\noindent\textbf{Network details.} The fingerprint extraction network is trained with RAW photos under guidance of triplet loss and frequency loss. In order to have a fair comparison with previous works \cite{kirchner2019spn,cozzolino2019noiseprint}, we select DnCNN \cite{zhang2017beyond} as backbone denoise network. The pre-computed PRNU in Figure 2 is extracted with wavelet-based denoiser from 40 flat images. We mine hardest positive sample and hardest negative sample per anchor within batch size of 2048, and triplet margin is 0.2. We train the network using Adam optimizer \cite{kingma2014adam}, learning rate of 1e-5, weight decay of 1e-6, and 100 epochs. 

\subsection{Network performance}

First, we ablate different settings of network components in Table 1. We derive one fingerprint from each RAW photo as the device registered fingerprint, and AUC and EER are calculated from the correlation matrix between these single image camera fingerprints, i.e., a 1,665$\times$1,665 matrix, which can directly reflect the model performance. Here, the basic denoise model has the same parameters with the pretrained denoise model in the work by Zhang et al. \cite{zhang2017beyond}. Compared with residual noise performance directly from pretrained denoise model, AUC is significantly improved after incorporating deep metric learning with hard mining strategy. Furthermore, the supervised guidance of pre-computed PRNU brings considerable performance gains, and the frequency domain loss slightly outperforms spatial domain loss, which is consistent with the observation in Figure 3. Meanwhile, pixel-shuffle operator also slightly improve the network performance. At last, we also utilize the post-processing approach of zero mean (ZM) and wiener filter (WF) proposed in PRNU algorithm \cite{chen2008determining} to further improve the result.

\begin{table}[]
    \centering
    \caption{Ablation of fingerprint extraction network.}
    \label{table2}
    \begin{tabular}{l||c|c}
        \toprule
        {Method} & AUC $\uparrow$ & EER $\downarrow$  \\ \toprule
        {basic denoise model} & 54.29\%  & 47.31\% \\ 
        { + Triplet loss} & 88.82\% & 18.53\% \\ 
        { + Spatial consistency} & 99.58\%  & 3.003\%  \\ 
        { + Pixel Shuffle} & 99.73\% & 2.645\% \\
        { + Frequency consistency} &{99.75\%}  & {2.345\%}  \\
        { + Postprocessing(ZM \& WF)} & {99.80\%}  & {1.656\%} \\  
        {Full model} & \textbf{99.80\%}  & \textbf{1.656\%} \\ \bottomrule
    \end{tabular}
\end{table}

We compare our proposed algorithm to two open-sourced camera fingerprints, i.e., wavelet denoiser based PRNU \cite{chen2008determining} and CNN-based camera model fingerprint (Noiseprint) \cite{cozzolino2019noiseprint} on benchmark dataset. Here, we test two scenarios: single photo fingerprint registration and multiple photos fingerprint registration, with $N = 1$ and $N = 40$ in Eq.(3) for registered fingerprint estimation respectively. Table 2 shows our algorithm outperforms PRNU and Noiseprint by a large margin on benchmark dataset with much higher AUC and lower EER. Figure 7(a-d) give some insights by plotting average correlation scores as confusion matrix of all 15 camera devices. Results of our proposed network show significantly better discrimination between positive pair and negative pair in comparison with others. ROC curves shown in Figure 7(e) also indicates our best performance among all the benchmarked algorithms.

\begin{table}[h!]
\caption{Fingerprint accuracy performance comparison of ours with previously open-sourced fingerprint extraction algorithms on iPhone RAW photos. Result with $^*$ indicates containing post-processing (ZM \& WF).}
\centering
\label{table1}
\scalebox{1}{
\begin{tabular}{c|l||c|c|c|c}
\toprule
{Register} & {Method}   & AUC $\uparrow$ & EER $\downarrow$  & AUC$^*$ $\uparrow$ & EER$^*$ $\downarrow$ \\ \toprule
\multirow{3}{*}{\begin{tabular}[c]{@{}c@{}}Single\end{tabular}}    & PRNU        & 63.23\%  & 40.62\% & 99.33\%  & 3.513\%\\
                                                                             & Noiseprint        & 53.10\%  & 47.75\%  & 63.23\%  & 40.62\%   \\  
                                                                             & Ours        & \textbf{99.75\%}  & \textbf{2.345\%}   & \textbf{99.80\%}  & \textbf{1.656\%} \\ 
                                                                            
                                        \midrule                                 
\multirow{3}{*}{\begin{tabular}[c]{@{}c@{}}Multiple\end{tabular}} & PRNU        & 65.14\%  & 37.20\% & 99.99\%  & 0.013\% \\ 
                                                                             & Noiseprint        & 50.66\%  & 48.30\%  & 51.43\%  & 49.40\%   \\  
                                                                             & Ours        & \textbf{99.95\%}  & \textbf{0.708\%}   & \textbf{100.0\%}  & \textbf{0.0\%}  \\  
                                                                             
                                                                         \bottomrule
\end{tabular}}
\end{table}

\begin{figure}[htb]
\centering
\includegraphics[width=0.7\linewidth]{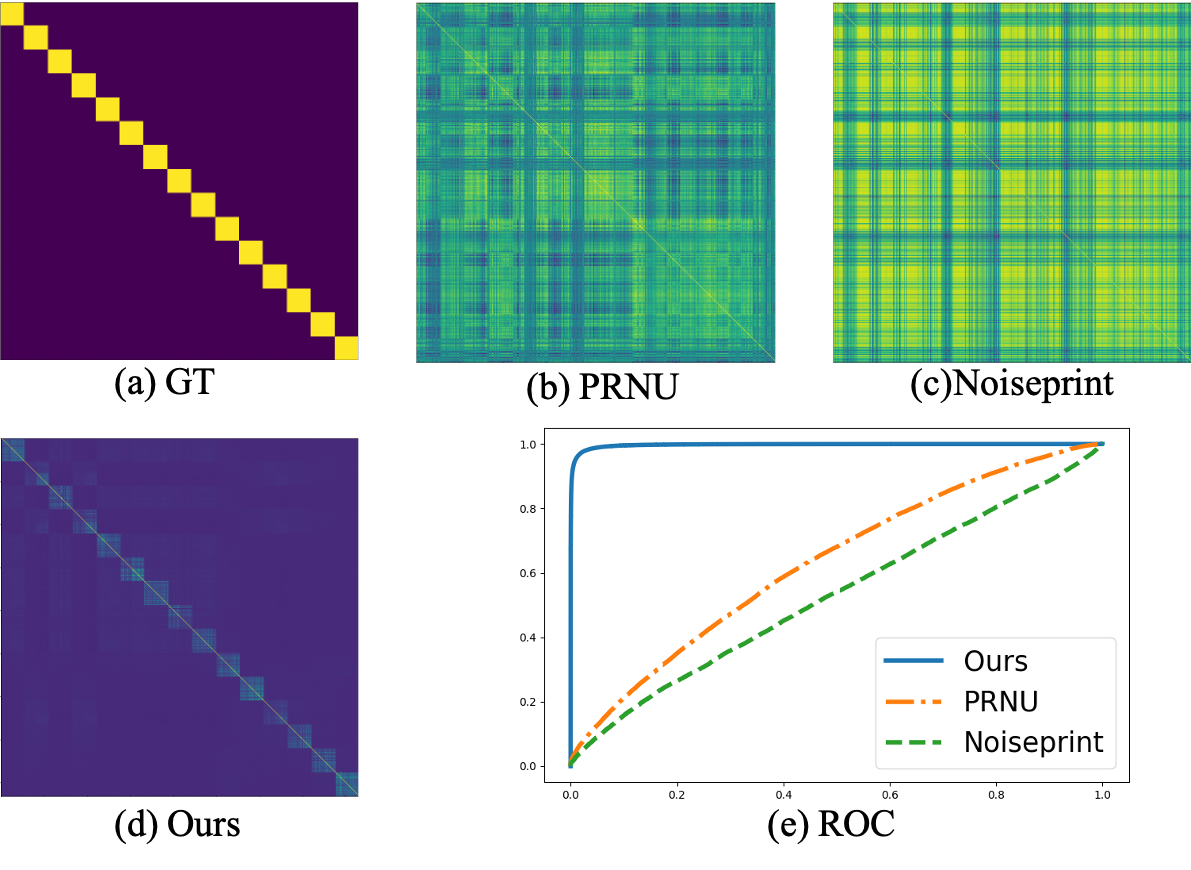} 
\caption{(a-d)Correlation scores between extracted fingerprints of ground truth (GT) and different methods. (d) ROC curves of the compared fingerprint extraction methods.}
\label{fig7}
\
\end{figure}

\subsection{Fingerprint optimization}

As mentioned in Section 4, we have proposed some optimization sub-modules to improve the accuracy of fingerprint extraction. First, we verify the effectiveness of block filtering with different block size and filter weight mask. 
The best result is achieved with block size of 64 and fixed percentage selection on luminance of 50\%, and this block filtering approach works effectively not only on our proposed method, but also on PRNU method. Another proposed accuracy improvement module is burst integration, and our test set for this benchmark with 1,665 photos consists of exactly 555 sets taken in three burst photography. Therefore, we can directly use the three-in-one estimation to generate fingerprints and calculate correlation. After optimization with block filtering and burst integration, our proposed network based fingerprint can achieve AUC = 99.99\% on single image registration with almost no error rate on the benchmark dataset (baseline without optimization in Table 2 is 99.75\%). We plot the histograms of all the positive and negative correlations before and after optimization sub-modules in Figure 8 (a) and (b) respectively. There exist some overlaps between the correlation distributions of positive and negative samples in Figure 8(a). But after the optimization sub-modules, the correlations of positive and negative samples are completely separated. At last, we also observe the fingerprint performance after binary quantization in Figure 8(c), and it maintains an acceptable matching accuracy with AUC = 99.98\% but with much less computational cost.

\begin{figure}[htb]
\centering
\includegraphics[width=\linewidth]{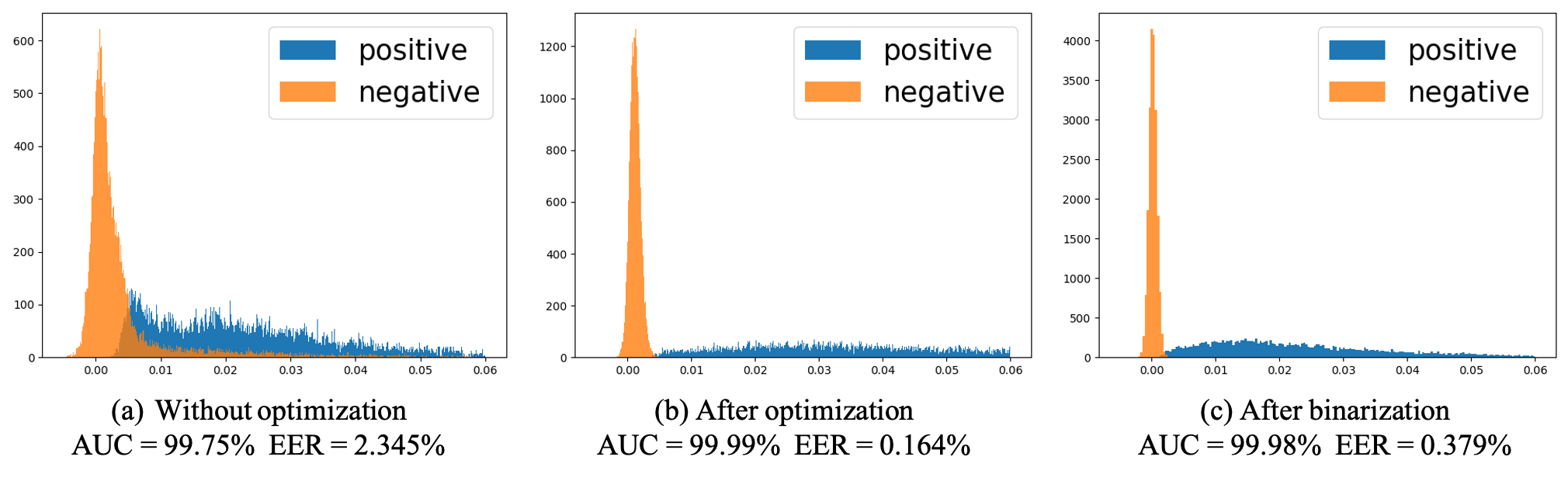} 
\caption{Histogram of correlation scores from same camera (positive) and different camera (negative) (a)before optimization, (b)after optimization, (c)after binary quantization.}
\label{fig7}
\end{figure}

\section{Conclusions}
This paper presents a reliable source camera identification framework for web photos. In detail, we firstly introduce a neural enhanced camera fingerprint extraction algorithm and demonstrate it strong performance. Then several general sub-modules are proposed to further optimize the system on both performance and computational efficiency. Finally for practical realization, two cryptographic schemes are incorporated to achieve the complete scheme design with higher reliability and security. We hope our new perspective will pave a way towards a new paradigm for accurate and practical source camera identification.

\begin{acks}
This work is partly supported by R\&D Program of DCI Technology and Application Joint Laboratory.
\end{acks}

\bibliographystyle{ACM-Reference-Format}
\bibliography{sample-base}

\appendix

\section{Security Evaluation of Spatial Splitting}

For evaluating the security of spatial splitting, we first derive one fingerprint from each RAW odd photo and one fingerprint from each RAW even photo for our benchmark test set (e.g., 15 iPhone cameras) based on our trained network, resulting in two sets of 1,665 fingerprints. Then for each RAW photo, we calculate NCC (Normalized Cross-correlation Coefficient) from its corresponding RAW odd fingerprint and RAW even fingerprint. Finally, for each camera, we calculate AUC from its NCC of RAW photos over two parts (odd and even). Figure 9 illustrates the NCC over two parts (odd and even) and AUC for each camera. We got an average of 96.22\% AUC with 5.33\% standard deviation, which indicates relatively low information leakage.

\begin{figure}[htb]
\centering
\includegraphics[width=\linewidth]{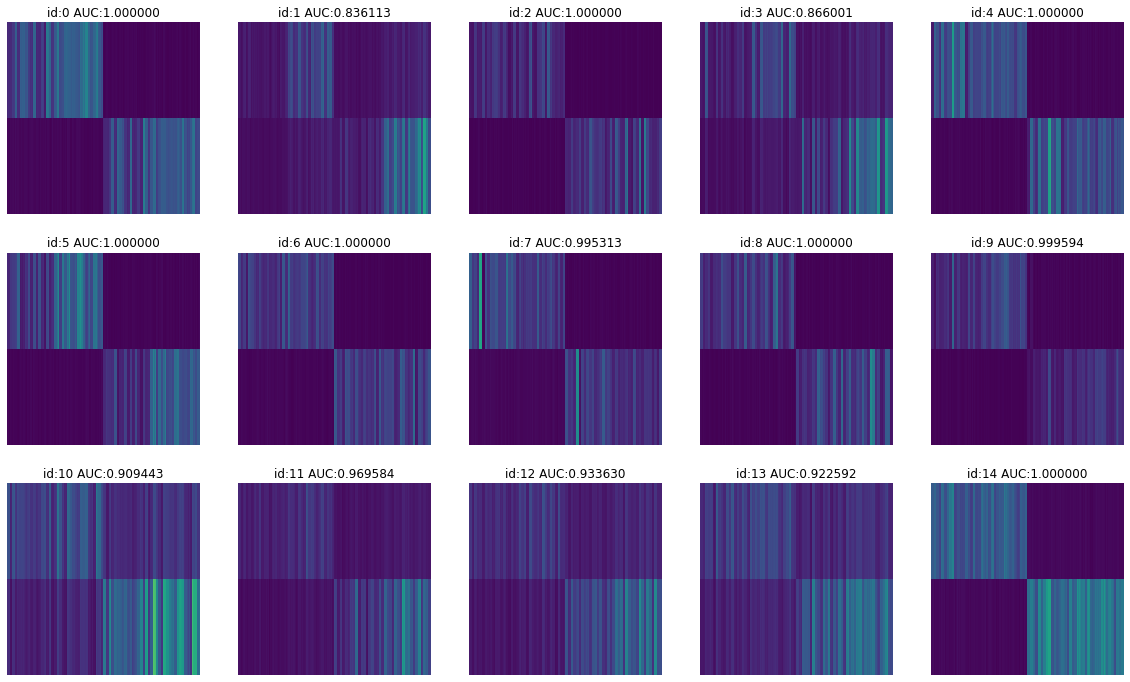} 
\caption{Normalized Cross-correlation Coefficient over two parts (odd and even) and AUC for each iPhone camera of benchmark test set}
\label{fig11}
\end{figure}

Furthermore, we calculated AUC and EER from the correlation matrix between RAW odd fingerprints and the correlation matrix between RAW even fingerprints, i.e., two 1,665 $\times$ 1,665 matrices . The results show 99.99\% AUC and 0.253\% EER for RAW odd fingerprints, and 99.92\% AUC and 0.497\% EER for RAW even fingerprints, both indicating highly discriminative ability.

\section{Network Performance on Android RAW photos and JPEG photos}

While our network was trained only on iPhone RAW photos, it displayed superior generalization and adaptability on both RAW Android photos and JPEG compressed photos.

For examining Android RAW photos, we provide an additional test dataset with 1,276 RAW photos from 15 Android smartphone cameras. Table 3 shows the fingerprint accuracy performance comparison of our algorithm with previous algorithms on this dataset. As shown in the table, our model outperforms conventional algorithms by a large margin with much higher AUC and lower EER.

\begin{table}[h!]
\caption{Fingerprint accuracy performance comparison of ours with previously open-sourced fingerprint extraction algorithms on Android RAW photos. Result with $^*$ indicates containing post-processing (ZM \& WF).}
\centering
\label{table3}
\scalebox{1}{
\begin{tabular}{c|l||c|c}
\toprule
{Register} & {Method}   &  AUC$^*$ $\uparrow$ & EER$^*$ $\downarrow$ \\ \toprule
\multirow{3}{*}{\begin{tabular}[c]{@{}c@{}}Single\end{tabular}}    & PRNU        &  99.81\%  & 1.600\%\\
                                                                             & Noiseprint        &  55.49\%  & 45.53\%   \\  
                                                                             & Ours    & \textbf{99.94\%}  & \textbf{0.907\%} \\ 
                                                                            
                                        \midrule                                 
\multirow{3}{*}{\begin{tabular}[c]{@{}c@{}}Multiple\end{tabular}} & PRNU        & 99.99\%  & 0.179\% \\ 
                                                                             & Noiseprint  & 51.39\%  & 49.23\%   \\  
                                                                             & Ours  & \textbf{100.0\%}  & \textbf{0.0\%}  \\  
                                                                             
                                                                         \bottomrule
\end{tabular}}
\end{table}

For examining JPEG compressed photos, we directly tested our released model on VISION dataset \cite{DBLP:journals/ejisec/ShullaniFISP17} (35 devices with 34,427 JPEG photos). On this JPEG compressed dataset we obtained 92.83\% AUC, indicating better discrimination than other SOTA methods \cite{lukas2006digital}.

\section{Security analysis in Detail}

Here we give detailed proofs to the theorems presented in security analysis subsection 5.3.

\begin{figure}[htb!]\label{fig:9}
\includegraphics[width=\linewidth]{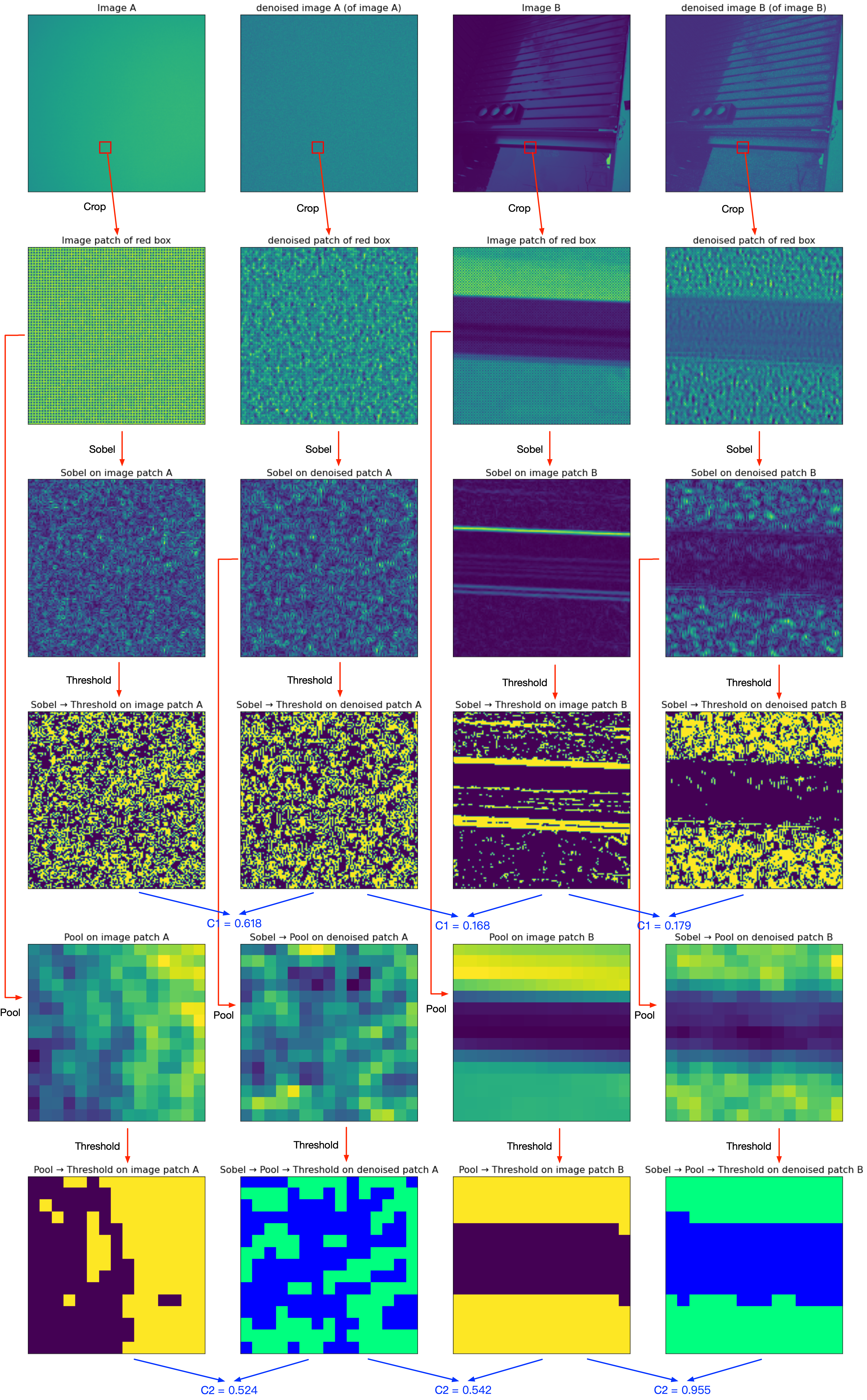}
\caption{Visualization of consistency coefficients.}
\end{figure}

\begin{figure}[htb!]\label{fig:10}
\includegraphics[width=\linewidth]{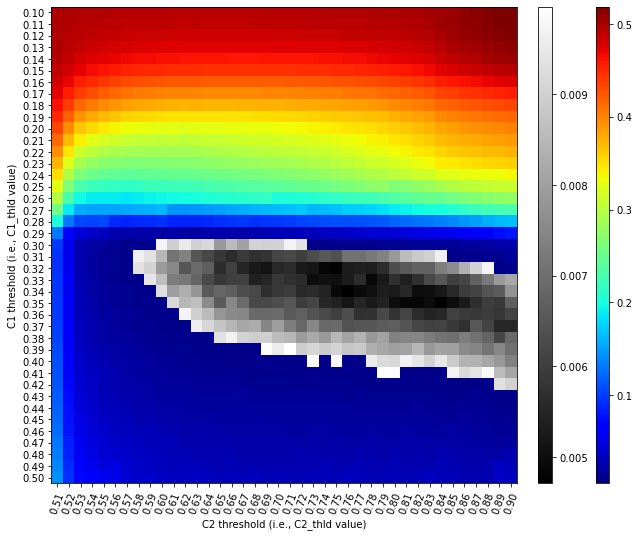}
\caption{Grid search on values of $C1\_thld$ and $C2\_thld$ against EER of patch level consistency checking results.}
\end{figure}

\begin{theorem}
If an attacker do not have access to the source camera, RAW odd photo $O$, fingerprint $K$ and private key $sk$, then the probability for the attacker to successfully forge a cryptographic secure signature (e.g., ECDSA, SM2 etc.,) with public key $pk$ is $P_a \leq \frac{1}{2^{\lambda-1}}$ where $\lambda$ is the security parameter.
\end{theorem}

\begin{proof}
Min-entropy \cite{konig2009operational} describes the uncertainty of a random value. As studied by Bayram et al. \cite{bayram2012efficient}, the signed fingerprints extracted from photos are truly random, hence every single bit of the fingerprint is independent from all the other fingerprint bits. Thus, neither the fingerprint extracted from RAW even photo nor the fingerprints extracted from other photos are helpless for an attacker to exploit the target odd signed fingerprint:
\begin{equation}\begin{aligned}
H_\infty (K_{raw1,odd}) = \tilde{H}_\infty (K_{raw1,odd}|K_{raw1,even})
\\ = \tilde{H}_\infty (K_{raw1,odd}|K_{raw*,\cdot })=m 
\end{aligned}\end{equation}
where $\tilde{H}_\infty (A|B)$ denotes the average min-entropy of A given B, $K_{raw*,\cdot}$ denotes the $\cdot$ fingerprint extracted from (odd or even) photo taken by cameras $*$.

Recall that we hide the user’s secret key $sk$ with secure sketch $s = K \oplus \mathbb{C}(sk)$. With $K$ truly random, an attacker can do nothing better than randomly generate a new fingerprint $K'$ and try to decode $sk' = \mathbb{D}(K'\oplus s)$ and see if $sk = sk'$. According to the work of Valsesia et al. \cite{valsesia2017user}, the probability for an attacker to recover user's secret key $sk$ is upper bounded by: 
\begin{equation}\begin{aligned}
P_{adv} = \mathbb{E}_{sk}[\mathbb{P}(K\in C_{sk})] = \frac{1}{2^{m+\lambda}}\sum |C_{sk}|\le \frac{1}{2^\lambda}
\end{aligned}\end{equation}

Besides recovering secret key from secure sketch, the attacker can also try to recover from user public key, however, as long as the user use cryptographic safe signature schemes such as ECDSA, SM2, BLS etc., and complies with key generation rules, then according to the Elliptic Curve Discrete Log Problem (ECDLP) \cite{galbraith2016recent}, the attacker can recover secret key $sk$ from public key $pk = g^{sk}, g\in \mathbb{G}$ and $\mathbb{G}$ is the group of elliptic curve points, with negligible probability $negl(\lambda)\le 2^{128}$,for instance, let $sk \in \mathbb{F}_q$, ${F}_q$ refers to the finite field modulo prime $q$, $|q| = 256$ and $\lambda=128$, breaking $sk$ from $pk$ requires averagely $2^{128}$ operations, thus achieve 128-bit security.

In conclusion, the attacker has less than $\frac{1}{2^{\lambda-1}}$ probability to successfully forge a signature. 
\end{proof}

\begin{theorem}
Let $hash(\cdot)$ be a cryptographic secure hash function (e.g., SHA256, SHA3 etc.), if the attacker do not have access to fingerprint $K$ and can not break the pre-image resistance property of $hash(\cdot)$ \cite{rogaway2004cryptographic}, then the attacker can forge a prove of statement \eqref{eqn:zkpstatement} with probability $P_b \leq \frac{1}{2^m}+\frac{1}{2^{2\cdot \lambda}}$ where $m$ is the bit length of fingerprint $K$,  $\lambda$ is the security parameter and a pre-image here refers to the message mapped to a particular digest via hashing.
\end{theorem}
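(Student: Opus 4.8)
The plan is to reduce the event ``the attacker forges a valid proof of statement~\eqref{eqn:zkpstatement}'' to the single sub-goal ``the attacker exhibits a bit string $\hat K$ with $\mathrm{hash}(\hat K)=h$'', and then to bound that sub-goal using the randomness of the sensor fingerprint together with the pre-image resistance of $\mathrm{hash}(\cdot)$. First I would dispose of the one escape route that is out of scope: instead of supplying a genuine witness, the attacker could break the soundness of the underlying zkSNARK and make the verifier accept a false statement; this is assumed impossible, so every successful forgery comes with public inputs $(h,E')$ and a witness $(K,P,D)$ that truly satisfy $\Pi_{statement}$.

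The second step shows that a valid $\hat K$ is the only real obstacle, i.e.\ that the remaining witness components are cheap to fabricate once $\hat K$ is known. The attacker sets $E:=E'$, so $E\approx E'$ holds trivially and nothing else in the statement couples $E$ to $O$ beyond $P=\{O,E\}$; picks any smooth image $D$; and sets $O:=D+\hat K$, so that $K'=O-D=\hat K$ exactly, $\mathrm{hash}(K')=\mathrm{hash}(\hat K)=h$, and the correlation check $K'\approx K$ is met by taking $K:=\hat K$. Since $\hat K$ has the scale of sensor noise, $O$ and $D$ agree on every low-level feature probed by $C1$ and $C2$ in Algorithm~1, hence $\mathrm{CheckConsistency}(D,O)=1$ and $\Pi_{statement}$ is satisfied. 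Therefore $P_b\le\Pr[\,\hat K:\ \mathrm{hash}(\hat K)=h\,]$.

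The third step bounds the right-hand side by splitting on whether $\hat K$ equals the true registered fingerprint $K$. If $\hat K=K$: as in the proof of the preceding (fuzzy-extractor) theorem, invoking the empirical randomness of signed sensor-noise fingerprints \cite{bayram2012efficient}, $K$ is uniform over $\{0,1\}^m$ with full min-entropy $m$, and none of the data available to the attacker --- public photos (and their even parts), the presented $E'$, or the digest $h$ (which leaks nothing under pre-image resistance) --- gives any advantage in guessing it, so this case has probability at most $2^{-m}$. If $\hat K\neq K$ but $\mathrm{hash}(\hat K)=h$: the attacker has found a pre-image of $h$ distinct from $K$, contradicting the pre-image resistance of a $2\lambda$-bit digest (e.g.\ SHA256 with $\lambda=128$) except with probability at most $2^{-2\lambda}$. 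Adding the two cases gives $P_b\le\frac{1}{2^m}+\frac{1}{2^{2\lambda}}$.

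The main obstacle I expect is the first case: making rigorous the claim that the attacker extracts no usable information about $K$. This is not a purely cryptographic statement --- it rests on the modeling assumption that fingerprint bits are independent and uniform and, crucially, that the odd-row fingerprint is statistically independent of the even-row fingerprint an adversary can estimate from public web photos, which I would import from the min-entropy discussion of the fuzzy-extractor theorem and the spatial-splitting evaluation rather than re-deriving it. A secondary obstacle is tightening the witness-completion argument, in particular verifying that for a genuinely noise-scale $\hat K$ the patchwise consistency thresholds in Algorithm~1 are comfortably cleared, so that the consistency check is never the binding constraint.
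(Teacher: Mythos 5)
Your proposal is correct and follows essentially the same route as the paper's own proof: forgery is reduced to exhibiting a fingerprint hashing to $h$, split into the case of guessing the true $K$ (bounded by $2^{-m}$ via the min-entropy/randomness argument imported from the fuzzy-extractor theorem) and the case of finding a different pre-image (bounded by $2^{-2\lambda}$ via pre-image resistance), with the union bound giving $P_b \leq \frac{1}{2^m}+\frac{1}{2^{2\lambda}}$. Your explicit witness-completion step ($O := D + \hat K$, $E := E'$) is just a more detailed version of the paper's remark that subtraction and Sobel operations are reversible, so a known fingerprint lets the attacker reverse-engineer a complying fake RAW photo.
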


\begin{proof}
In our solution, if the attacker is able to forge a set of inputs that complies the statement and pass the verification. He must be able to either regenerate fingerprint protected by user, or find another fingerprint the digest $h'$ of which is identical to the registered digest $h$. Since operations such as subtraction and $Sobel$ operations are reversible, if an attacker is capable to regenerate the fingerprint $K$, he could reversely comply with the statement to generate a fake RAW photo. However, according to our previous discussion of entropy of sign fingerprint, an attacker with no access to user’s camera is only capable to reconstruct the random identical fingerprint $K'=K$ with negligibly $\frac{1}{2^m}$. 

The attacker could also try to find another fingerprint producing the same digest value than the registered one.
However, according to pre-image resistance property of cryptographic secure hash function, an attacker can do nothing but the brute-force attack to find the pre-image from digest. For instance, probability to successfully brute force pre-image of SHA256 is $\frac{1}{2^{2\cdot \lambda}} = \frac{1}{2^{256}}$.

Thus, if the attacker is unable to stole user’s fingerprint $K$, he has less than $\frac{1}{2^m}+\frac{1}{2^{2\cdot \lambda}}$ probability to forge a our ZKP statement. 
\end{proof}

\section{Consistency Checking in Detail}

\subsection{Consistency Coefficient Visualization}

To illustrate the ability of two consistency coefficients $C1$ and $C2$, we visualize an example in Figure 9. As show in the figure, $C1$ focuses on close-up consistency while $C2$ focuses on contour consistency. Collaboratively using $C1$ and $C2$ can detect almost all the near duplicate patches (i.e., image patches and their denoised version).

\subsection{Feasible Hyperparameter Range}

We partitioned our photo data set into patches to investigate the feasible hyperparameter range. For each patch we use its denoised version as positive samples and randomly choose denoised patches with the same location from three other images as negative samples. In our experiment we used 150,000 positive samples and 450,000 negative samples. Then we grid searched on the values of $C1\_thld$ and $C2\_thld$ against the Equal Error Rate (EER) of patch level consistency checking results.

As show in Figure 10, the black color indicates very low EER which takes a wide range of area. This illustrates that we have a wide range to select feasible combination of $C1\_thld$ and $C2\_thld$ which grantees the reliability of consistency checking in practice.

\end{document}